\newtheorem{theorem}{Theorem}
\theoremstyle{definition}
\newtheorem{definition}{Definition}[section]
\theoremstyle{remark}
\begin{document}
%
% paper title
% Titles are generally capitalized except for words such as a, an, and, as,
% at, but, by, for, in, nor, of, on, or, the, to and up, which are usually
% not capitalized unless they are the first or last word of the title.
% Linebreaks \\ can be used within to get better formatting as desired.
% Do not put math or special symbols in the title.
\title{Vehicle Scheduling Problem}
%
%
% author names and IEEE memberships
% note positions of commas and nonbreaking spaces ( ~ ) LaTeX will not break
% a structure at a ~ so this keeps an author's name from being broken across
% two lines.
% use \thanks{} to gain access to the first footnote area
% a separate \thanks must be used for each paragraph as LaTeX2e's \thanks
% was not built to handle multiple paragraphs
%

\author{Mirmojtaba~Gharibi, 
        Steven~L.~Waslander,~\IEEEmembership{Senior Member,~IEEE,}
        and~Raouf~Boutaba,~\IEEEmembership{Fellow,~IEEE}% <-this % stops a space
\thanks{M. Gharibi and R. Boutaba are with D. Cheriton School
of Computer Science, University of Waterloo, Waterloo,
ON, N2L 3G1, Canada. Contacts: mgharibi@uwaterloo.ca and rboutaba@uwaterloo.ca respectively.}% <-this % stops a space
%\thanks{R. Boutaba is J. Doe and J. Doe are with Anonymous University.}% <-this % stops a space
\thanks{S. L. Waslander S. L. Waslander is with the Institute of Aerospace Studies, University of
Toronto, Toronto, ON M3H 5T6, Canada, stevenw@utias.utoronto.ca}
%\thanks{Manuscript received April 19, 2005; revised August 26, 2015.}
}

% note the % following the last \IEEEmembership and also \thanks - 
% these prevent an unwanted space from occurring between the last author name
% and the end of the author line. i.e., if you had this:
% 
% \author{....lastname \thanks{...} \thanks{...} }
%                     ^------------^------------^----Do not want these spaces!
%
% a space would be appended to the last name and could cause every name on that
% line to be shifted left slightly. This is one of those "LaTeX things". For
% instance, "\textbf{A} \textbf{B}" will typeset as "A B" not "AB". To get
% "AB" then you have to do: "\textbf{A}\textbf{B}"
% \thanks is no different in this regard, so shield the last } of each \thanks
% that ends a line with a % and do not let a space in before the next \thanks.
% Spaces after \IEEEmembership other than the last one are OK (and needed) as
% you are supposed to have spaces between the names. For what it is worth,
% this is a minor point as most people would not even notice if the said evil
% space somehow managed to creep in.

% The paper headers
\markboth{}%
{Gharibi \MakeLowercase{\textit{et al.}}:  }
% The only time the second header will appear is for the odd numbered pages
% after the title page when using the twoside option.
% 
% *** Note that you probably will NOT want to include the author's ***
% *** name in the headers of peer review papers.                   ***
% You can use \ifCLASSOPTIONpeerreview for conditional compilation here if
% you desire.

% If you want to put a publisher's ID mark on the page you can do it like
% this:
%\IEEEpubid{0000--0000/00\$00.00~\copyright~2015 IEEE}
% Remember, if you use this you must call \IEEEpubidadjcol in the second
% column for its text to clear the IEEEpubid mark.

% use for special paper notices
%\IEEEspecialpapernotice{(Invited Paper)}

% make the title area
\maketitle

% As a general rule, do not put math, special symbols or citations
% in the abstract or keywords.
\begin{abstract}
We define a new problem called the Vehicle Scheduling Problem (VSP). The goal is to minimize an objective function, such as the number of tardy vehicles over a transportation network subject to maintaining safety distances, meeting hard deadlines, and maintaining speeds on each link between the allowed minimums and maximums. We prove VSP is an NP-hard problem for multiple objective functions that are commonly used in the context of job shop scheduling. With the number of tardy vehicles as the objective function, we formulate VSP in terms of a Mixed Integer Linear Programming (MIP) and design a heuristic algorithm. We analyze the complexity of our algorithm and compare the quality of the solutions to the optimal solution for the MIP formulation in the small cases. Our main motivation for defining VSP is the upcoming integration of Unmanned Aerial Vehicles (UAVs) into the airspace for which this novel scheduling framework is of paramount importance.
\end{abstract}

% Note that keywords are not normally used for peerreview papers.
\begin{IEEEkeywords}
Vehicle Scheduling Problem (VSP), Internet of Drones (IoD), Unmanned Aerial Vehicle (UAV), Unmanned Aircraft System (UAS) Traffic Management (UTM).
\end{IEEEkeywords}

% For peer review papers, you can put extra information on the cover
% page as needed:
% \ifCLASSOPTIONpeerreview
% \begin{center} \bfseries EDICS Category: 3-BBND \end{center}
% \fi
%
% For peerreview papers, this IEEEtran command inserts a page break and
% creates the second title. It will be ignored for other modes.
\IEEEpeerreviewmaketitle

\section{Introduction}
%\hfill mds
 
%\hfill August 26, 2015

% The very first letter is a 2 line initial drop letter followed
% by the rest of the first word in caps.
% 
% form to use if the first word consists of a single letter:
% \IEEEPARstart{A}{demo} file is ....
% 
% form to use if you need the single drop letter followed by
% normal text (unknown if ever used by the IEEE):
% \IEEEPARstart{A}{}demo file is ....
% 
% Some journals put the first two words in caps:
% \IEEEPARstart{T}{his demo} file is ....
% 
% Here we have the typical use of a "T" for an initial drop letter
% and "HIS" in caps to complete the first word.
\IEEEPARstart{V}{ehicle scheduling problem (VSP)} is a new scheduling problem that we define in this work. Many of the scheduling problems in the literature are motivated by real life applications. The motivation for defining this new scheduling problem is the impending integration of the Unmanned Aerial Vehicles (UAVs) in the airspace and a lack of framework for accommodating them in a scalable way. They will be used in a wide array of applications from search and rescue, to package delivery, traffic enforcement, infrastructure inspection and cinematography. This means in any city, there will be a high amount of congestion that needs to be managed to prevent mid-air collisions as well as to provide an efficient service\cite{Gha16}.
While our main motivation comes from the application of UAVs, our scheduling problem is generic and malleable enough to use in other areas as well.

On a high level, the problem we try to solve is as follows. We are given a path over a graph for each vehicle. Our goal is to minimize the number of tardy vehicles (or any other objective function) subject to the deadlines, minimum and maximum allowed speeds on the links, and separation time gap needed when entering the nodes (which play the role of intersections).

We first compare VSP with a large class of scheduling problems known as Job Shop Scheduling (JSP). The Job Shop Scheduling problem comes in many types which are motivated by the real life problems they strive to solve. Therefore, given the vast amount of literature on the subject, a first point of attack will be to model our problem in terms of a variant of JSP. In the classic Job Shop Scheduling problem, we are given a set of jobs, each composed of a chain of operations, and each operation can be performed on a specific machine from the set of all machines. A machine can only process one operation at a time at a specified processing time. At first sight, it seems the nodes in our graph can be simulated by machines and the separation time is analogous to the processing time. However, upon further inspection, it is not clear how to model the time it takes for a vehicle to reach from one node to another. Of course, this does not seem possible in a straightforward way using the classical type. However, even using variants of JSP with properties including BLOCKING, NO-WAIT, SETUP TIMES, etc. does not seem to represent our problem in an uncomplicated way. In JSP with BLOCKING, machines will hold up the operation and remain busy if the next machine is busy which corresponds to a lack of buffer between machines. In JSP with NO-WAIT, a task cannot wait between machines. In JSP with SETUP TIMES, there  will be a time delay to set up a new job on a machine. For a reference to these variants of JSP, look at e.g. \cite{Pin16}.

Some researchers have extended JSP and these more or less standardized variants to include transportation times between the machines. That is for a job to start executing in the next machine, it will be delayed by the transportation time between the previous machine and the new one. For instance, see \cite{Knu00} and \cite{Sch98}. However, in these cases, the transportation time is fixed whereas in VSP the velocity over a link can be chosen from a range. Furthermore, in some of these works, one or more robots are used for transportation with empty trips as necessary which again does not have a resemblance to our work \cite{Knu00},\cite{Hur05}.

scheduling in computer networks is another area with a vast literature. However, the underlying structure in computer networks is different. Firstly, the bottleneck are the router buffers (similar to the nodes in our case) and the packets spend most of their time in the routers whereas in our case vehicles spend only a minimal amount of time at a node and spend most of their times travelling on the links. Another difference is that packets might drop (i.e. vanish if needs be), but this is not an option for the vehicles.  Furthermore, the velocity of each packet during transmission on a link is constant and equal whereas vehicles can have different velocities over the links. These results in drastically different scheduling algorithms and policies which makes it difficult to use in VSP \cite{Pet07}.

For the application of scheduling of Unmanned Aerial Vehicles, we are aware of one work \cite{Yao19}. However, the authors formulate the problem as a joint optimization of routing and scheduling whereas in our case we are only interested in the scheduling aspect. This is a valid problem on its own as for various reasons, as the operator, we might not be authorized to make routing decisions. For instance, the government might restrict UAV flights for a company to only certain paths. 

In the context of air traffic management, the problems are formulated differently and the algorithms being used are not directly applicable to our problem. To be more precise, in the context of air traffic management, the air space is separated into sectors that are basically a volume of airspace and the goal is to avoid over loading each sector by means of either postponing the flight or changing the calculated route for a flight \cite{All12, Nol10}. Therefore the underlying graph structure in our problem is different.

An area of research that has some similarities is the train scheduling literature. There are similarities and differences between our approach. Similar to us, some of the train scheduling problems use a Mixed Integer Programming (MIP) model and a similar idea of treating arrival Time Stamps at a segment as a modelling variable. But most of these models are designed for a single main line with multiple short segments attached to the main line where trains can effectively park and let other trains take over before continuing their travel on the main line. There will be different set of constraints as well involved such as assignment of locomotives and crews which do not have a counterpart for unmanned aerial vehicles\cite{Fan15, Cor98}. 

\subsection{Our contribution}
Our contribution can be summarized as follows.

We define a new scheduling problem called Vehicle Scheduling Problem and formulate it in terms of a mixed integer linear programming. Our model has applications among other things to movements of autonomous vehicles over a transportation network; especially autonomous unmanned aerial vehicles. The model is versatile in that we can model vehicles with various minimum and maximum speeds as well as upperbounds on deadlines. Furthermore, we can adjust the safety gap as needed per pair of vehicles for each intersections (that is nodes of the graph).

We then proceed to show the NP-hardness of VSP for all commonly used objective functions in the context of job shop scheduling problems. These include minimizing
\begin{itemize}
\item Makespan: The time the last vehicle exits the graph.
\item Total (weighted) completion time: Total or (equivalently) the average travel time with potentially different weights for different vehicles.
\item Maximum lateness: The maximum (positive or negative valued) difference between deadline and trip completion of all vehicles.
\item Total tardiness: The total time past deadlines
\item (weighted) number of tardy vehicles: The number of vehicles that missed their deadlines.
\end{itemize}

It is possible to provide an MIP of VSP for all these objective function. To demonstrate that, we pick the objective function of number of tardy vehicles and give an MIP formulation of VSP. To deal with the computational complexity of VSP in this case, we devise a heuristic algorithm in the case where all trips are requested at the same time. We analyze the complexity of our algorithm and compare the solutions yielded from our algorithm to the optimal solution to MIP for a few random instances with a small number of vehicles. We also compare these results to a baseline algorithm that we designed.

Finally, we conclude with a discussion of the shortcomings of our algorithm such as sensitivity to noise as well as ideas for future improvement.

\section{Problem Definition}
We first give an informal definition of VSP and then proceed to define it in a more rigorous way.

In VSP, we are given a number of vehicles each of which request to make a trip at some point in time. Trips take place over a transportation network which is abstracted away as a graph. Each trip consists of a sequence of edges on this graph. We can have unlimited vehicles on each edge travelling at the same time, but when any two vehicles reach a vertex (i.e. intersection), they must be separated by some separation time constant. Therefore, to minimize some objective function (e.g. number of tardy vehicles) we have to schedule  vehicles in a way that minimizes the congestion or prioritizes certain vehicles. The velocity of each vehicle on each link must be between a minimum and a maximum velocity. Given the length of the link, one can instead find the minimum and maximum allowed time on the link. In the formal definition, instead of formulating in terms of velocities, we use these time constants. Each vehicle has to meet its trip completion hard deadline. The goal is to assign an arrival time for each vehicle at each vertex on their way (i.e. the schedule) such that the objective function is minimized.

Now, we define VSP in a more rigorous way.
\begin{definition}[Vehicle Scheduling Problem (VSP)]\label{Def VSP}
An instance of VSP is a $9$-tuple $\left(G,W,\tau_{min},\tau_{max},\rho, d, d', S,f\right)$ as follows. Graph 
$G=(V,E)$ is a directed connected graph that represents the traffic network. 
The set $W=\{W_1,W_2,\cdots, W_n\}$ is a set of directed walks on $G$ and $W_j=\left(W_j^1,W_j^2,\cdots,W_j^{q_j}\right) \in V^{q_j}$ is the sequence of vertices vehicle $j$ will visit. 
Time constants $\tau_{min}$ and $\tau_{max}$ are the set of minimum and maximum allowed times for a vehicle to reach its next vertex in the walk. Accordingly, for each vehicle $j$, 
$\tau_{min,j}$ takes the form $\tau_{min,j}=\left(\tau_{min,j}^1, \tau_{min,j}^2,\cdots,\tau_{min,j}^{q_j-1}\right)$. In a similar way, $\tau_{max,j}$ is defined.
The $n$-tuple $\rho \in \mathbb{R}^n$ denotes the trip request times and hard deadline $d'\in \mathbb{R}^n$ is the maximum allowed delays for completing the walks. Soft deadline $d\in \mathbb{R}^n$ is the maximum allowed delays after which depending on the objective function, a penalty will incur.
The set $S$ is a set of elements $s_{j_1,j_2}^{i_1,i_2}\in \mathbb{R}$ that specify the time separation between distinct vehicles $j_1$ and $j_2$ when performing the $i_1$'th and $i_2$'th step of their walks, respectively. Each element $s_{j_1,j_2}^{i_1,i_2}$ is defined only if vertices $W_{j_1}^{i_1}$ and $W_{j_2}^{i_2}$ are identical.

Subject to the following constraints, the goal is to find a set $t=\{t_1,t_2,\cdots,t_n\}$ where for each vehicle $j$, $t_j=\left(t_j^1,t_j^2,\cdots,t_j^{q_j}\right)$ assigns to each vertex $W_j^i$ an arrival time stamp while the objective function $f:\mathbb{R}^q \mapsto \mathbb{R}$ is minimized. Note that we define $q={\sum_{1\leq j\leq n} q_j}$ and $f$ takes the arrival time stamps $t_j^i$ as the input.
\\Constraints:
\\Trip request time, trip continuity, and hard deadline: 
\begin{equation}
\rho_j \leq t_j^1\leq t_j^2\leq\cdots\leq t_j^{q_j}\leq  d'_{j}
\end{equation}
Minimum and maximum allowable link travel time:
\begin{equation}
\tau_{min,j}^{i} \leq t_j^{i+1}-t_j^i \leq \tau_{max,j}^{i}
\end{equation}
Separation enforcement:
\begin{equation}\label{Eq Sep Abs}
|t_{j_1}^{i_1}-t_{j_2}^{i_2}| \geq s_{j_1,j_2}^{i_1,i_2}
\end{equation}
\end{definition}

In the definition above, only the vertices are included in the description of the walk and not the edges. In our definition, since edges have unlimited capacity, we do not concern ourselves with the particular edge that is chosen. However, in real life application, the length of a link and the minimum and maximum velocity among other factors might limit us to a particular edge.

\section{NP-hardness proof}
In the general case, VSP is NP-hard. 
We prove this for various objective functions.
First we define the GENERALIZED K-MACHINE UNIT JOB SHOP PROBLEM. For the cases $K=3$ and $K=2$, these problems are already defined in \cite{Len79} and \cite{Kra00} (with no release dates, deadlines, or no-wait condition), respectively. Since our goal is to reduce this problem to VSP, we do not use the more or less standard notation for JSP since it makes the description of the reduction difficult.

\begin{definition}[GENERALIZED K-MACHINE UNIT JOB SHOP PROBLEM: optimization version] 
An instance of GENERALIZED K-MACHINE UNIT JOB SHOP PROBLEM is a $6$-tuple $\left(M, J, r, \delta,\theta, f\right)$ where $M$ is a set of machines $M=\{M_1, M_2, \cdots, M_K\}$, and $J$ is a set of jobs $\{J_1,J_2,\dots,J_n\}$. Each job $J_j=\left(M_j^1,M_j^2,\cdots, M_j^{q_j}\right)\in M^{q_j}$ denotes a sequence of machines that will each process job $J_j$ in order, for a duration of one time unit. 
The $n$-tuple $r=\left(r_1,r_2,\cdots,r_n\right)$ denotes the release dates and $\delta=\left(\delta_1,\delta_2,\cdots,\delta_n\right)$ denotes the deadlines corresponding to each job, respectively. Additionally, $\theta$ is a Boolean parameter that is $true$ if jobs cannot wait between machines, otherwise $false$.

Subject to the following constraints, the goal is to find a set $x=\{x_1,x_2,\cdots,x_n\}$ where for each job $j$, $x_j=\left(x_j^1,x_j^2,\cdots,x_j^{q_j}\right)$ assigns to each operation $J_j^i$ a starting time on their associated machine  while the objective function $f:\mathbb{R}^q \mapsto \mathbb{R}$ is minimized. Note that we define $q={\sum_{1\leq j\leq n} q_j}$ and $f$ takes the arrival time stamps $x_j^i$ as the input.
\\Constraints: 
\\
-- Operations done in order, i.e.  
$x_j^i+1\leq x_j^{i+1}$.
\\
-- Any machine can process only one operation at a time, i.e. for any distinct $j_1,j_2$, if there exists $i_1,i_2$ such that $J_{j_1}^{i_1}=J_{j_2}^{i_2}$, then $\left|x_{j_1}^{i_1}-x_{j_2}^{i_2}\right|\geq 1$.
\\
-- No job scheduled before its release date, i.e.
$x_j^1\geq r_j$.
\\
-- If deadlines are hard deadlines, then meeting deadlines, i.e.
$x_j^{q_j}\leq \delta_j$.
\\
-- No-wait condition, i.e. 
if $\theta$ is $true$, for any $j$ and any $i$ with $1\leq i\leq q_j-1$, we have $x_j^i+1 = x_j^{i+1}$.

\end{definition}

We first show there is an efficient reduction from GENERALIZED K-MACHINE UNIT JOB SHOP PROBLEM with an arbitrary objective function to VSP.
\begin{theorem}\label{thm-reduction}
GENERALIZED K-MACHINE UNIT JOB SHOP PROBLEM has a polynomial time reduction to VSP.
\end{theorem}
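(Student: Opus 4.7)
The plan is to build a VSP instance whose parameters encode, constraint for constraint, the given GENERALIZED K-MACHINE UNIT JOB SHOP PROBLEM instance $(M,J,r,\delta,\theta,f)$. I would set $V = M$ and take $G$ to be the complete directed graph on $V$ (with self-loops), so that every job sequence is automatically realisable as a walk. The jobs carry over verbatim: $W_j = J_j$, $\rho_j = r_j$, and $d'_j = \delta_j$ (the soft deadline $d$ and the function $f$ are passed through unchanged). The unit processing time becomes a uniform separation constant $s_{j_1,j_2}^{i_1,i_2} = 1$ for every quadruple with $W_{j_1}^{i_1} = W_{j_2}^{i_2}$. The chain constraint $x_j^i + 1 \leq x_j^{i+1}$ is encoded by $\tau_{min,j}^i = 1$; the no-wait flag is captured by setting $\tau_{max,j}^i = 1$ when $\theta$ is true, and $\tau_{max,j}^i = T$ for some sufficiently large $T$ (e.g.\ $T = \max_j \delta_j$) when $\theta$ is false.

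The bulk of the proof is to verify that the identity map $t_j^i := x_j^i$ is a bijection between feasible solutions of the two instances that preserves the objective. In the forward direction: the release date $r_j \leq x_j^1$ becomes $\rho_j \leq t_j^1$; the ordering inequality $x_j^i + 1 \leq x_j^{i+1}$ yields both monotonicity and the lower bound $\tau_{min,j}^i \leq t_j^{i+1}-t_j^i$; when $\theta$ is true the no-wait equality matches $\tau_{max,j}^i = 1$; machine mutual exclusion between distinct jobs becomes the separation constraint; and $x_j^{q_j} \leq \delta_j$ becomes $t_j^{q_j} \leq d'_j$. The backward direction is symmetric: $\tau_{min,j}^i = 1$ combined with monotonicity reconstructs $x_j^i + 1 \leq x_j^{i+1}$, and $\tau_{max,j}^i = 1$ reconstructs the no-wait equality. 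Since $f$ and its input variables agree under the identity map, the optimal objective values coincide, and the construction is computable in time polynomial in the input size.

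The construction is conceptually straightforward, so the main obstacle is bookkeeping: making sure the \emph{start-of-operation} convention in the job shop and the \emph{arrival-at-vertex} convention in VSP line up without an off-by-one drift. Using $t_j^i = x_j^i$ and $d'_j = \delta_j$ (rather than $\delta_j + 1$) is what keeps the deadlines and the unit separation consistent. A second, smaller subtlety is the possibility that some job visits the same machine twice in succession: the self-loops in $G$ accommodate this, and the needed one-unit gap is delivered by $\tau_{min,j}^i = 1$ rather than by $S$, which is defined only for distinct vehicles. With these two points handled, the reduction goes through uniformly for every objective function $f$, which is exactly what the subsequent NP-hardness arguments will require.
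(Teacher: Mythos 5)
Your construction is essentially the one the paper uses (complete digraph on machine-vertices, $\tau_{min}\equiv 1$, $\tau_{max}\equiv 1$ or unbounded to encode the no-wait flag, $\rho=r$, identity map on the time variables), and you are in fact more explicit than the paper on one point it leaves implicit, namely that the machine mutual-exclusion constraint is realised by setting every defined separation constant $s_{j_1,j_2}^{i_1,i_2}=1$. The self-loop remark and the finite choice $T=\max_j\delta_j$ are harmless refinements.

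There is, however, one step that fails: you set $d'_j=\delta_j$ unconditionally, i.e.\ you always install the job shop deadlines as \emph{hard} deadlines in the VSP instance. The GENERALIZED K-MACHINE UNIT JOB SHOP definition only imposes $x_j^{q_j}\leq\delta_j$ when the deadlines are declared hard; for the objective functions this theorem is subsequently applied to (maximum lateness, total tardiness, number of tardy jobs), $\delta_j$ is a due date that a feasible schedule may violate at a cost. Under your map, a job shop instance whose optimum necessarily has tardy jobs is sent to a VSP instance in which those schedules are simply infeasible, so the identity map is not a bijection between feasible sets and the optimal objective values need not coincide --- your own verification line ``$x_j^{q_j}\leq\delta_j$ becomes $t_j^{q_j}\leq d'_j$'' silently assumes the hard-deadline case. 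The paper branches here: if $\delta$ is a hard deadline it sets $d=d'=\delta$, and if $\delta$ is soft it sets $d=\delta$ and $d'=+\infty$, so that only the penalty (through $f$) and not the feasible region records the due date. Adding that case split repairs your argument; everything else goes through as you describe.
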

\begin{proof}
The input to GENERALIZED K-MACHINE UNIT JOB SHOP is a $6$-tuple $(M,J,r,\delta,\theta,f)$. The input to VSP is a $9$-tuple $(G,W,\tau_{min},\tau_{max},\rho,d,d',S,f)$. We generate an input to the latter, for every input to the former.
We set $G$ with vertices $V_1,V_2,\cdots,V_K$ to be a complete digraph with $K$ vertices.
We establish a one-to-one correspondence between the jobs in $J$ and walks in $W$ as follows. Assuming $|J|=n$ and job $J_j$ has a length of $q_j$;
for $1\leq j\leq n$ and the job $J_j=(M_j^1,M_j^2,\cdots,M_j^{q_j})$, we create a corresponding walk $W_j=(W_j^1,W_j^2,\cdots,W_j^{q_j})$ where $W_j^i=V_{j'}$ if and only if $M_j^i=M_{j'}$.
The sequences $\rho$, $d$, and $d'$ will have a length of $n$ and for any $j$, both $\tau_{min,j}$ and $\tau_{max,j}$ have length $q_j-1$.
Next, we set $\tau_{min,j}=\left(1,1,\cdots,1\right)$.
If $\theta$ is $false$ (i.e. jobs can wait), we set $\tau_{max,j}=\left(+\infty,+\infty,\cdots,+\infty\right)$, otherwise if $\theta$ is $true$ (no-wait), $\tau_{max}=\left(1,1,\cdots,1\right)$.
We set $\rho=r$. For the deadlines, if $\delta$ is a hard deadline, we set $d=d'=\delta$. However, if $\delta$ is a soft deadline, we set $d=\delta$ and $d'=+\infty$.
Lastly, we will use the same objective function $f$ in the reduction with arguments changing from $x_j^i$ to $t_j^i$.
This completes the conversion of the inputs. Now, to translate the solution for VSP to GENERALIZED K-MACHINE UNIT JOB SHOP is straightforward.  To convert the outputs, for any arrival time stamps $t_j^i$ for $1\leq j\leq n$, we let $x_j^i=t_j^i$, and with this, the reduction is complete.
\end{proof}

We prove NP-hardness for all commonly used objective functions in the context of job shop scheduling \cite{Pin16}.

\begin{theorem}
VSP with any of the following objective functions is NP-hard.

Minimizing:
\begin{itemize}
\item Makespan ($C_{max}$)
\item Total completion time ($\sum C_j$)
\item Total weighted completion time ($\sum w_j C_j$)
\item Maximum lateness (lateness can be positive, 0, or negative for each vehicle) ($L_{max}$)
\item Total tardiness (0 or positive for each vehicle) ($\sum T_j$)
\item Weighted number of tardy (or late) vehicles ($\sum w_j U_j$)
\item Number of tardy (or late) vehicles ($\sum U_j$)
\end{itemize}
\end{theorem}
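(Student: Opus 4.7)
The plan is to apply Theorem \ref{thm-reduction}, which furnishes a polynomial-time, objective-preserving reduction from GENERALIZED K-MACHINE UNIT JOB SHOP PROBLEM to VSP. It therefore suffices, for each objective $f$ in the list, to exhibit an NP-hard variant of the K-machine unit JSP with objective $f$, and then invoke the reduction to transport hardness into VSP.

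First I would anchor the argument on the classical makespan result: the unit K-machine JSP with $C_{\max}$ is NP-hard for $K=3$ by Lenstra, Rinnooy Kan and Brucker \cite{Len79}, and for $K=2$ by \cite{Kra00}. Applying Theorem \ref{thm-reduction} with $f = C_{\max}$ immediately yields NP-hardness of VSP for makespan.

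Next, I would handle $L_{\max}$, $\sum T_j$, $\sum U_j$, and $\sum w_j U_j$ via a single deadline-threshold gadget layered on top of the makespan reduction. Starting from a makespan JSP decision instance with threshold $K$, I set every job deadline to $\delta_j = K$ in the derived instance; the resulting schedule then satisfies $L_{\max}\le 0$ (equivalently $\sum T_j = 0$, $\sum U_j = 0$, $\sum w_j U_j = 0$) if and only if a schedule with $C_{\max}\le K$ exists. This reduces makespan-JSP to each of these four JSP variants, after which Theorem \ref{thm-reduction} carries NP-hardness through to VSP. In the VSP instance this corresponds to setting the soft deadline $d_j = K$ while leaving the hard deadline $d'_j = +\infty$, so the feasibility regions are not artificially restricted.

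The main obstacle is $\sum C_j$ and $\sum w_j C_j$: these sum-type objectives do not submit to the threshold trick, because a single late job bumps $C_{\max}$ above $K$ but shifts $\sum C_j$ only additively, so the YES/NO thresholds fail to line up. For these I would either invoke a known NP-hardness result for job shop with total (weighted) completion time from the scheduling literature (e.g.\ the two-machine job shop with $\sum C_j$), or construct a direct reduction from a strongly NP-hard number problem such as 3-PARTITION, using chains of dummy jobs that lock in a rigid skeleton schedule so that the sum of completion times of the ``real'' jobs encodes a partition-type quantity. Once the JSP hardness is in hand for these two objectives, Theorem \ref{thm-reduction} finishes the argument for VSP, and the seven cases combined establish the theorem.
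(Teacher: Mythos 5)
Your overall strategy---push NP-hardness of suitable unit job shop variants through Theorem~\ref{thm-reduction}---is exactly the paper's strategy, and your handling of five of the seven objectives is sound. For makespan you match the paper ($K=3$, \cite{Len79}); note however that your parenthetical claim that the $K=2$ unit makespan case is NP-hard via \cite{Kra00} is wrong ($J2\mid p_{ij}=1\mid C_{\max}$ is polynomially solvable), though this is harmless since the $K=3$ result carries the argument. For $L_{\max}$, $\sum T_j$, $\sum U_j$, and $\sum w_j U_j$ your uniform deadline-threshold gadget ($\delta_j = K$ for all $j$, so that the optimum is $\le 0$, resp.\ $=0$, iff a schedule with $C_{\max}\le K$ exists) is a valid Turing reduction and is genuinely different from, and more self-contained than, the paper's route: the paper instead cites separate published hardness results for each objective (total tardiness for the 2-machine unit job shop \cite{Kra00,Tim98}, maximum lateness for the no-wait 2-machine unit job shop \cite{Tim98}, number of tardy jobs for the 2-machine unit job shop with release dates). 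Your approach buys independence from those specific citations at the cost of only establishing ordinary (not strong) NP-hardness where the cited results are stronger.

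The genuine gap is in the total (weighted) completion time cases, which you correctly identify as resistant to the threshold trick but then leave unresolved. Your first fallback---``invoke a known NP-hardness result for the two-machine job shop with $\sum C_j$''---does not work as stated, for two reasons. First, Theorem~\ref{thm-reduction} reduces only from the \emph{unit} job shop, so hardness of $J2\mid\mid\sum C_j$ with general processing times cannot be transported through it. Second, the unit-time two-machine job shop with $\sum C_j$ and \emph{no release dates} is polynomially solvable (the paper notes this explicitly, citing \cite{Kra00,Kub96,Tim98}), so the instance you name is on the wrong side of the complexity boundary. The correct source problem, and the one the paper uses, is the 2-machine unit job shop \emph{with release dates}, i.e.\ $J2\mid p_{ij}=1,\ r_j\mid \sum C_j$, which is NP-hard; release dates are precisely the feature that the reduction's assignment $\rho = r$ is designed to carry into VSP. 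Your alternative 3-PARTITION sketch is too undeveloped to assess---you would need to encode the numbers through chain lengths while respecting unit processing times, and no details are given---so as written the $\sum C_j$ and $\sum w_j C_j$ cases are not established.
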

\begin{proof}

For any of these objective functions, we refer to a special case of GENERALIZED  K-MACHINE  UNIT  JOB SHOP  PROBLEM that is proven NP-hard in the literature. Then, by applying Theorem \ref{thm-reduction}, the NP-hardness of VSP is established for that objective function. In the following, the objective functions related to tardy vehicles correspond to tardy jobs in the context of JSP problem.

\textit{Makespan}: GENERALIZED K-MACHINE UNIT JOB SHOP is NP-hard according to \cite{Len79}, where $K=3$ and all jobs are released at time $0$ ($r_j=0$) and there is no deadline ($d_j = +\infty$) and the jobs can wait ($\theta=false$). This problem is referred to as 3-MACHINE UNIT JOB SHOP. 

 \textit{Total tardiness}:  GENERALIZED  K-MACHINE  UNIT  JOB SHOP  PROBLEM with $K=2$ and the same setting as in the Makespan case above is NP-hard. This problem is referred to as 2-MACHINE UNIT JOB SHOP \cite{Kra00, Tim98}. 

\textit{Total (weighted) completion time}, \textit{(Weighted) number of tardy vehicles}: Using the same setting as above for 2-MACHINE UNIT JOB SHOP PROBLEM with release dates, this problem is NP-hard. Note that without the release dates, this problem  is polynomially solvable \cite{Kra00,Kub96,Tim98}. 

\textit{Maximum lateness}: For similar setting as in the case of Total Completion Time, 2-MACHINE UNIT JOB SHOP PROBLEM where jobs cannot wait between two machines ($\theta=true$) is strongly NP-hard \cite{Tim98}. 
\end{proof}

\section{Exact MIP formulation}
In the next section, we give an MIP formulation of the VSP where to demonstrate, we use the number of tardy vehicles as our objective function. It is possible to develop MIP formulation for the other objective functions as well by perhaps introducing more variables and doing the necessary minor adjustments.
\subsection{Notations}
We use the same notation from VSP problem and some additional notations as follows.
\begin{itemize}
\item Variables $P_{j_1,j_2}^{i_1,i_2}$, $N_{j_1,j_2}^{i_1,i_2}$, decision binary variable $b_{j_1,j_2}^{i_1,i_2}$, and a large fixed number $M_{j_1,j_2}^{i_1,i_2}$ will be used to convert the absolute value constraints to mixed integer linear constraints. The decision binary variable will effectively decide between each pair of vehicles with a conflicting node, which one will have the right of way. 
\item Variable $l_j$ will designate a late vehicle, $X_j$ will designate how late vehicle $j$ is and a large fixed number $M_{j}$ will be used to convert the constraints of form $X_j=\max(0,d'_j-t_j^{q_j})$ into mixed integer linear constraints. We add variable $d_j$ to designate a soft deadline that a vehicle will strive to meet. On the other hand, the hard deadline $d'_j$ must be met to have a feasible solution.
\end{itemize}

\subsection{Objective function}
We use the total number of tardy vehicles as our objective function.
\begin{equation}\label{eq-objective-function}
\min \sum_{1\leq j \leq n}l_j.
\end{equation}

\subsection{Constraints}
We use the same constraints as in Definition \ref{Def VSP} except for the constraints that follow. The first constraint is the separation enforcement constraint of Eq. \ref{Eq Sep Abs}
where through the standard techniques, the absolute value constraint can be converted to a set of three mixed integer linear constraints.

\begin{equation}
t_{j_1}^{i_1}-t_{j_2}^{i_2} = P_{j_1,j_2}^{i_1,i_2} - N_{j_1,j_2}^{i_1,i_2}
\end{equation}
\begin{equation}
b_{j_1,j_2}^{i_1,i_2} \cdot s_{j_1,j_2}^{i_1,i_2} \leq P_{j_1,j_2}^{i_1,i_2} \leq b_{j_1,j_2}^{i_1,i_2} \cdot M_{j_1,j_2}^{i_1,i_2}
\end{equation}
\begin{equation}
\left(1-b_{j_1,j_2}^{i_1,i_2}\right) \cdot s_{j_1,j_2}^{i_1,i_2} \leq N_{j_1,j_2}^{i_1,i_2} \leq \left(1-b_{j_1,j_2}^{i_1,i_2}\right) \cdot M_{j_1,j_2}^{i_1,i_2}
\end{equation}

Furthermore, we convert the late vehicle constraints into the acceptable form for a linear integer programming instance using familiar techniques as follows.
\begin{equation}
d_j - t_j^{q_j}\leq X_j
\end{equation}
\begin{equation}
0\leq X_j
\end{equation}
\begin{equation}
X_j\leq M_j\cdot \left(1-l_j\right)
\end{equation}
\begin{equation}
X_j\leq d_j - t_j^{q_j} + M_j\cdot l_j
\end{equation}

\section{Scheduling algorithms}
\subsection{Baseline algorithm: PROXIMITY }
We introduce this baseline algorithm with the purpose of comparing its result to the result from the heuristic algorithm introduced in the next section. Our goal is to establish the superiority of the latter algorithm. 

The algorithm PROXIMITY  resembles the decentralized heuristics used by car drivers in the real world and is as follows. Each vehicle on a link attached to an intersection will access that intersection at the earliest possible time, if and only if it is the closest (in time) vehicle to cross that intersection. An implementation of this algorithm is shown in Fig. \ref{algo:psuedocode1} and Fig. \ref{algo:psuedocode2} with $mode$ variable set accordingly.

\subsection{Heuristic algorithm: DEADLINE \& PROXIMITY }
Our heuristic algorithm is as follows. The algorithm returns the best solution from three independent subroutines. 
\begin{itemize}
    \item PROXIMITY : It is the baseline algorithm introduced earlier.
    \item ABS DEADLINE \& PROXIMITY: It is similar to PROXIMITY  with the difference that among vehicles of equal (time) distance, the ones with a lower so called delay slack are prioritized over those with higher values. We calculate the delay slack for a vehicle as the difference between its deadline and the shortest trip time possible. However, if no delay slack is left; that is the calculated delay slack is a negative quantity, we give these vehicles the lowest priorities and among themselves, they will cross the intersection in arbitrary orders. 
    \item REL DEADLINE \& PROXIMITY: It is similar to the previous subroutine except that the delay slack is divided by the number of intersections left in the trip.
\end{itemize}
An implementation for each of these subroutines is shown in Fig. \ref{algo:psuedocode1} and Fig. \ref{algo:psuedocode2} with $mode$ variable set accordingly.

\begin{figure}[!htb]
\centering
 \begin{algorithmic}[1]
 \renewcommand{\algorithmicrequire}{\textbf{Input:}}
 \renewcommand{\algorithmicensure}{\textbf{Output:}}
 \REQUIRE paths for vehicles, $mode$
 \ENSURE  time stamps for each node
 \STATE $timeStampSequence = [ ]$ \textit{//empty list} 
 \STATE $allVehiclesList =$ list of all vehicles
 \STATE sort $allVehiclesLis$t ascendingly using $SortingKey$ for comparison
    \FORALL {$vehicle$ in $allVehiclesList$}
        \STATE assign earliest possible time stamp to the $vehicle$'s first node and update data structures
        \STATE if does not exist, add time stamp to $timeStampSequence$ in a sorted increasing order
    \ENDFOR
\WHILE{$length(timeStampSequence)>0$}
    \STATE $t=timeStampSequence[0]$
    \STATE $curVehList=$ list of all unfinished vehicles with time stamp $t$ for their current node
    \FORALL{$v$ in $curVehList$}
        \STATE $N=$ next node in path of $v$
        \STATE $conflictVehList=$ list of all vehicles with $N$ in their trip
        \STATE $curVehNextNList=$ list of all vehicles with next node $N$ and arrival time stamp $t$ on their current node
        \STATE sort $curVehNextNList$ ascendingly using $sortingKey$ for comparison
        \FORALL{$v2$ in $curVehNextNList$}
            \IF{$v2$ has no time stamp for $N$}
                \STATE assign the earliest time stamp for $N$ for $v2$ that satisfies separation constraints imposed by $conflictVehList$ and update data structures
            \ELSE
                \STATE continue
            \ENDIF
            \STATE if does not exist, add this time stamp to $timeStampSequence$ in a sorted increasing order
        \ENDFOR
    \ENDFOR
    \STATE delete $timeStampSequence[0]$
\ENDWHILE
\end{algorithmic}
 \caption{A subroutine to calculate the time stamps for vehicles. Depending on our input to the algorithm $mode$ variable, the time stamps are calculated using the baseline rule or our heuristic rule.}
\label{algo:psuedocode1}
\end{figure}

\begin{figure}[!htb]
\centering
 \begin{algorithmic}[1]
 \renewcommand{\algorithmicrequire}{\textbf{Input:}}
 \renewcommand{\algorithmicensure}{\textbf{Output:}}
\REQUIRE $vehicle$, $mode$
\ENSURE A two tuple to be used for sorting by first element and then the second element 
\STATE $first=$time distance to next node
\IF{$mode$ is PROXIMITY }
    \STATE $second=$ None
    \ELSIF{$mode$ is ABS DEADLINE \& PROXIMITY }
    \STATE $second=\max(0,$ remaining delay slack$)$
\ELSIF{$mode$ is REL DEADLINE \& PROXIMITY }
    \STATE $second=\max(0,$ delay slack $/$ number of remaining of nodes$)$
\ENDIF
\RETURN (first,second)
\end{algorithmic}
 \caption{The $SortingKey$ subroutine returns a 2-tuple for each vehicle that determines their local priority based on the chosen rule.}
\label{algo:psuedocode2}
\end{figure}

\subsection{Complexity}
The implementation of the three subroutines mentioned above are the same with minor differences in the  $SortingKey$ subroutine component in Fig. \ref{algo:psuedocode2}. This results for the time complexity of all these algorithms to be similar. This is true because with efficient implementation of the data structures, the $SortingKey$ has the same time complexity of $O(1)$ in all of the cases. Therefore to obtain the time complexity of all the three subroutines, we need only to obtain the time complexity of the subroutine in Fig. \ref{algo:psuedocode1}. 

The time complexity of the algorithm will depend on the actual implementation of the data structures which is as follows. In the implementation, we assume we populate these data structures at the point in the algorithm where $t$ is assigned which will take $O(1)$ time.
We keep the following hash tables:
\begin{itemize}
    \item Time stamps to list of vehicles: $H_1:t \mapsto V$
    \item 2-tuples of current node time stamps and next nodes to list of vehicles: $H_2: (t,N) \mapsto V$. 
    \item Nodes to the list of time stamps: $H_3: t \mapsto (t,N)$
\end{itemize}

The most time consuming operations is step 22 in Fig. \ref{algo:psuedocode1} which takes $O\left(q\right)$ where $q$ is the total number of time stamps. Assuming $D$ is the degree of the underlying graph, considering the outer loops, the total time spent on this operation is $O\left(q^2Dn\right)$ where $n$ is the number of vehicles.

If we make the further assumption that the length of each path is constant, the worst case complexity of both the baseline and the heuristic algorithm is simplified to $O\left(n^3\right)$.

\section{Numerical Results}

In this section, we compare the performance of our heuristic algorithm DEADLINE \& PROXIMITY  to our baseline algorithm  PROXIMITY. Additionally, for the smaller cases, We compare the results from both these algorithms to the exact solution obtained from solving the MIP. We consider three metrics in our comparisons, namely, the number of vehicles, the tightness of the soft deadlines, and the run time of the algorithms.

In our setup, we create uniformly at random pairs of source and destinations and calculate the shortest paths on the grid like graph $G$ as shown in Fig \ref{fig:Fig-GridGraph}. We use the following parameters.
\begin{itemize}
\item Separation gap $s_{j_1,j_2}^{i_1,i_2}=5$ for any $j_1,j_2,i_1,i_2$.
\item Minimum and maximum times on link $\tau_{min,j}^i=50$ and $\tau_{max,j}^i=+\infty$ for all $i,j$.
\item Hard deadlines $d'_{j}=2.2q_j\tau_{min,j}$ for each $j$.
\item Trip request time $\rho_{j}=0$ for each $j$.
\end{itemize} 

For our setup we used Gurobi optimization engine version 8.1.1 running on Microsoft Surface Pro 5 with 8GB RAM and 4 Intel(R) Core(TM) i5-7300U CPUs clocking at 2.60GHz. A Gurobi Python 3.6 binding was used to solve the MIP exactly. Also, we implemented the baseline and the heuristic algorithm in Python 3.6.

We create $20$ instances of VSP with four different vehicle counts; 25, 50, 75, and 100. For each instance, we experiment with varying levels of tightness of the soft deadline. The fraction of missed deadlines corresponding to each deadline value are reported in Fig. \ref{fig1}. The worst run time over all deadlines for each number of vehicles is reported in Fig. \ref{fig:chartRunTimes} for the baseline and the heuristic algorithms.

For 25 vehicles, Fig. \ref{1a} shows the results comparing the percentage of missed deadlines for our heuristic algorithm, the baseline, and the exact solution from MIP. Given the small difference between the result from the baseline algorithm and the exact solution, it is plausible to infer the baseline algorithm is in fact a very effective one in producing schedules with a few number of missed deadlines.

In the case of 25 vehicles, we are able to solve most of the MIP instances exactly in a reasonable amount of time (less than 1 hour). Since The MIP quickly becomes intractable with increasing the number of vehicles; the 25 vehicles is about the maximum number of vehicles that can be used in our test. The comparison between the run time of our heuristic algorithm to the MIP instance solved by Gurobi is reported in Fig. \ref{fig:HeuToMIP} on a logarithmic scale. In most cases, our algorithm is between 1 to 3 orders of magnitude faster, despite the fact that our heuristic algorithm is implemented in notoriously slow Python. 

Finally, Fig. \ref{fig:chartRunTimes} shows a comparison between the run time of our heuristic algorithm and the baseline algorithm.

\begin{figure}[!htb]
\centering
\includegraphics[trim=0 50 350 0, clip,width=2.5in]{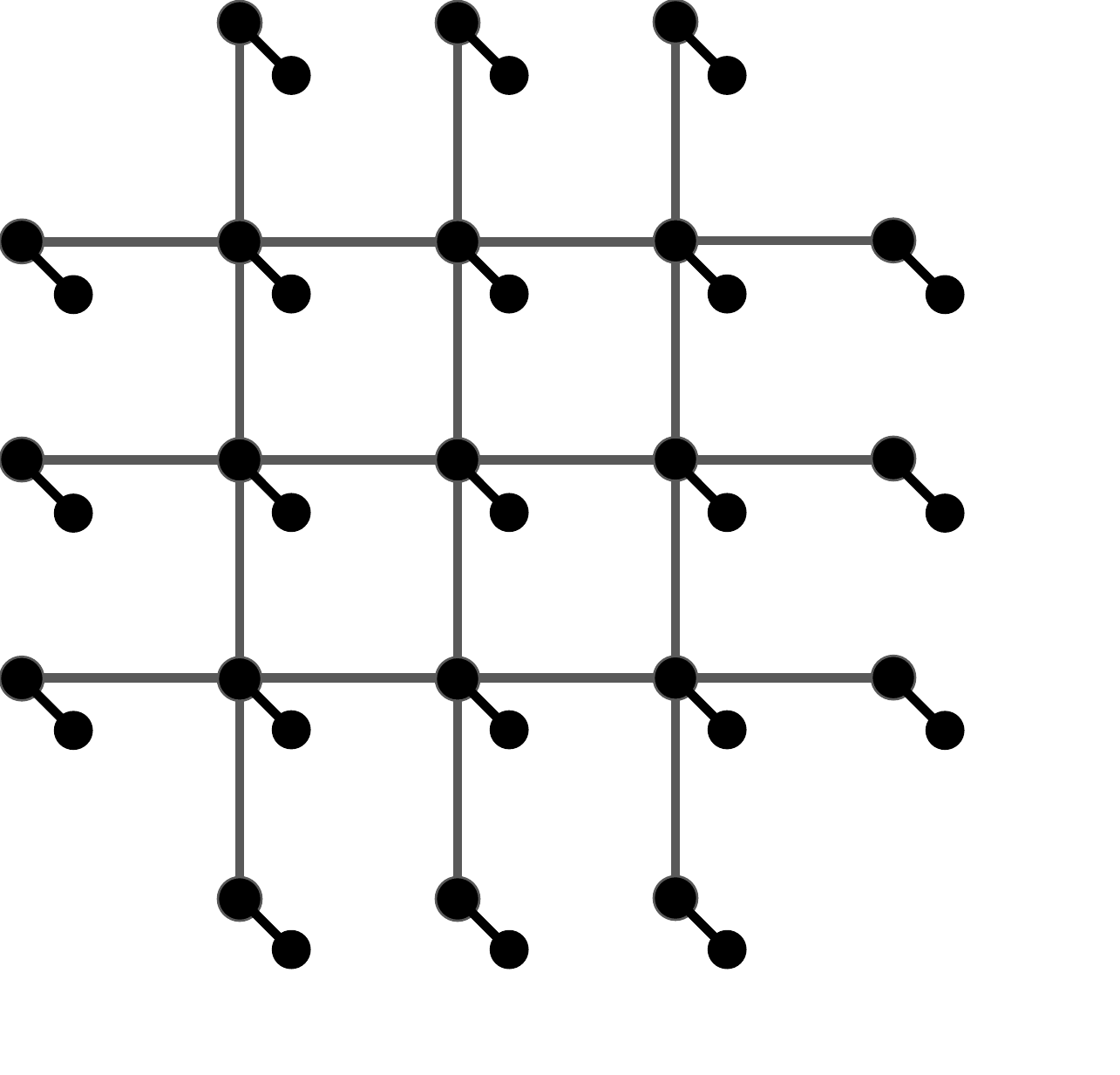}
%pdfcrop --margins '-60 -5 -10 -25' FigureCellularPDF.pdf FigureCellularCropPDF.pdf
% where an .eps filename suffix will be assumed under latex, 
% and a .pdf suffix will be assumed for pdflatex; or what has been declared
% via \DeclareGraphicsExtensions.
\caption{Underlying graph for simulation is a 5x5 grid like   graph. Each edge represent a bidirectional path with dedicated lanes for each direction. }
\label{fig:Fig-GridGraph}
\end{figure}

\begin{figure*} 
    \centering
  \subfloat[Simulation result for 25 vehicles\label{1a}]{%
       \includegraphics[width=3.5in]{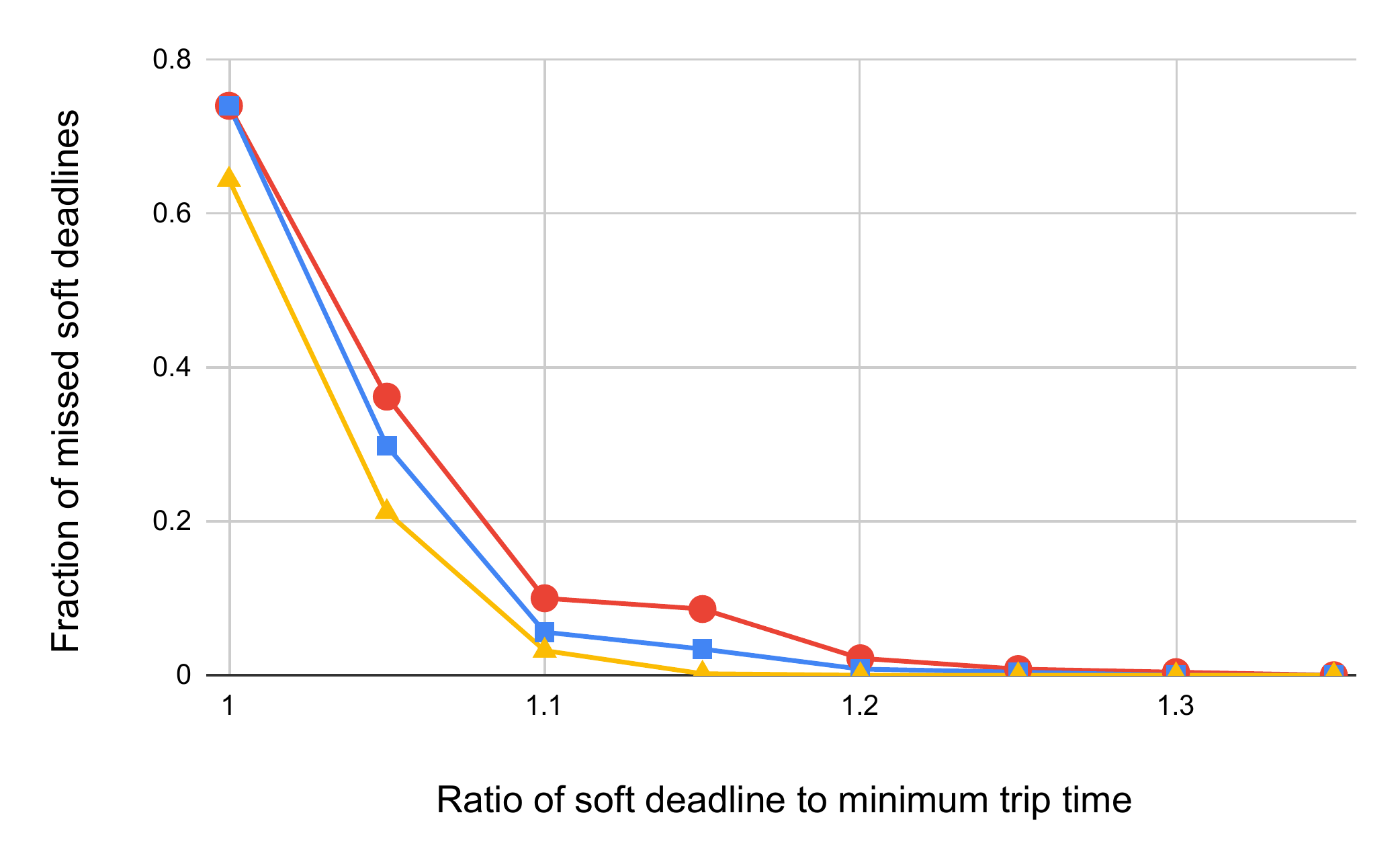}}
       \hfill
  \subfloat[Simulation result for 50 vehicles\label{1b}]{%
        \includegraphics[width=3.5in]{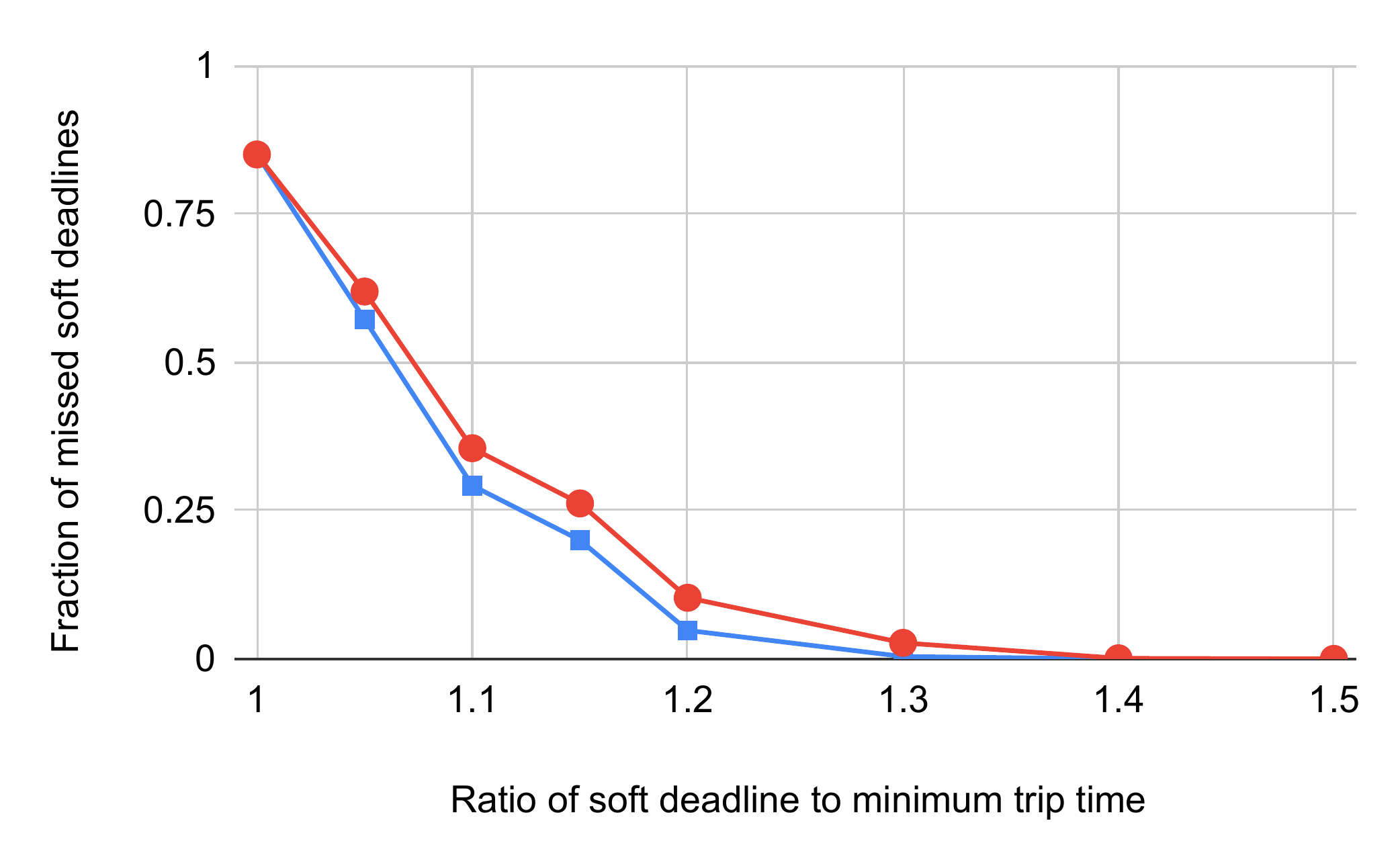}}
    \\
  \subfloat[Simulation result for 75 vehicles\label{1c}]{%
        \includegraphics[width=3.5in]{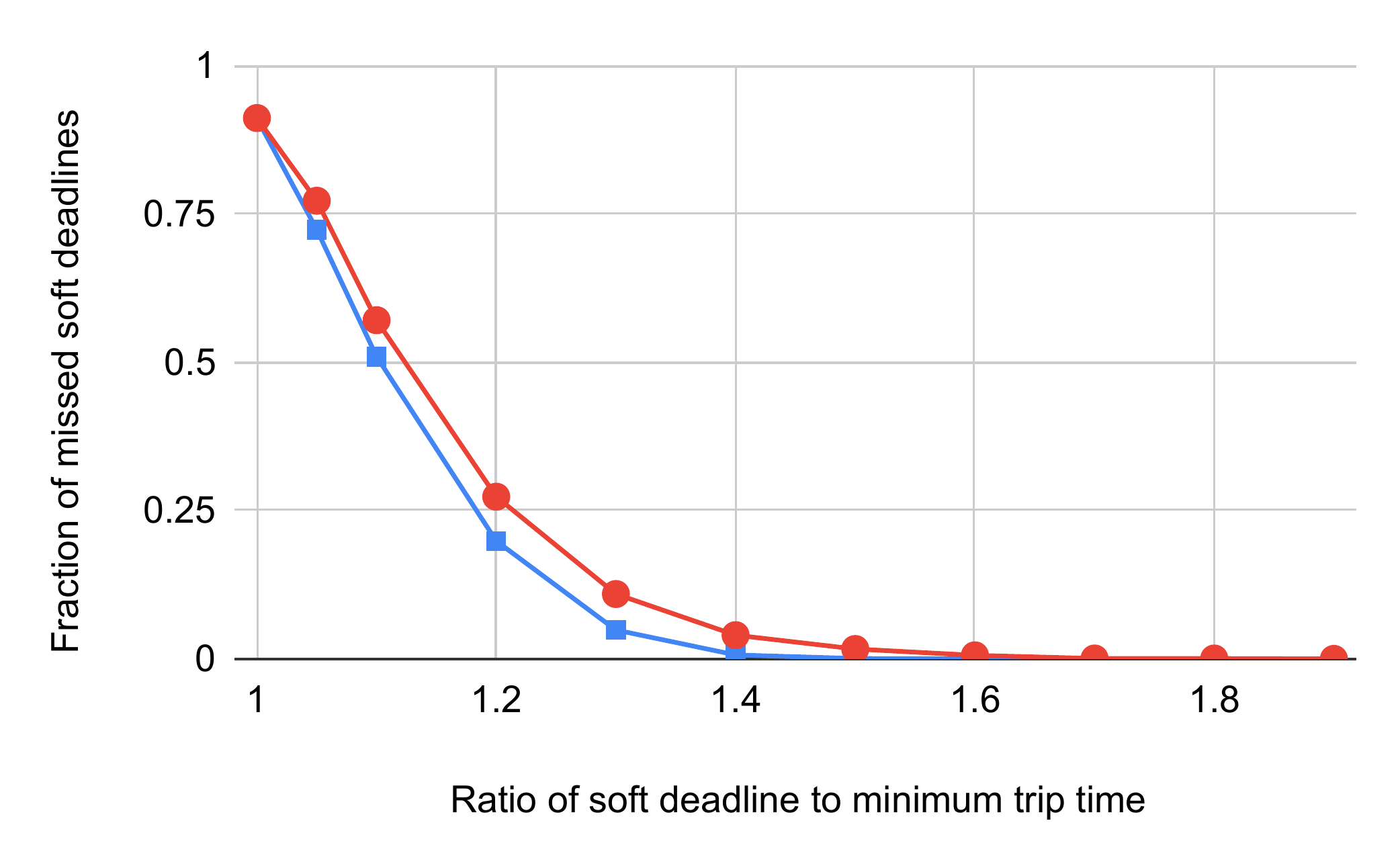}}
    \hfill
  \subfloat[Simulation result for 100 vehicles\label{1d}]{%
        \includegraphics[width=3.5in]{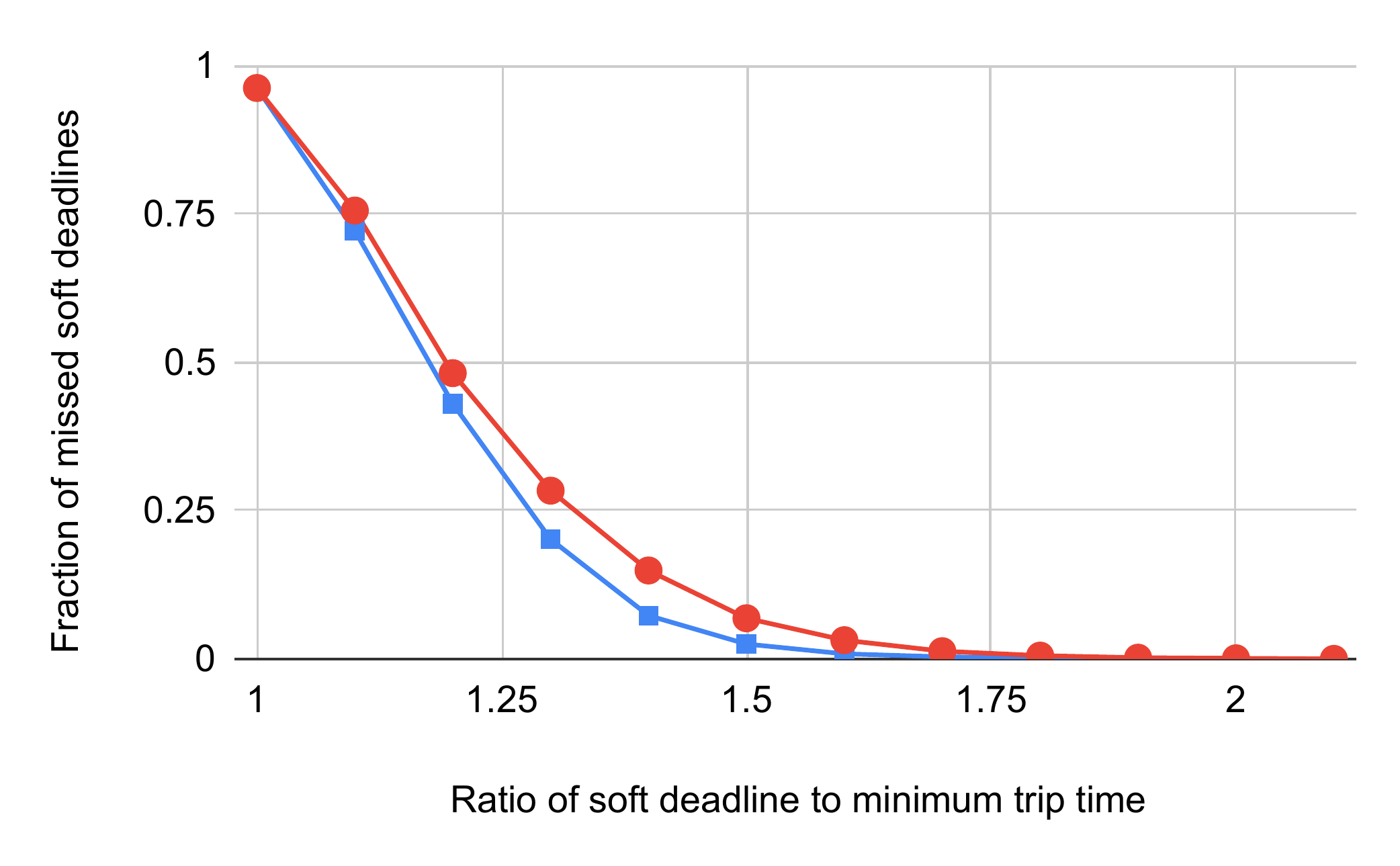}}
  \caption{(a), (b), (c), (d) Simulation results for respectively 25, 50, 75, and 100 vehicles with randomly generated trips comparing the solutions to the baseline algorithm (circle markers) versus our heuristic algorithm (square markers). For the case of 25 vehicles, we include also the exact MIP solution (triangle markers). The vertical axis is the fraction of tardy vehicles averaged over 20 random instances and the horizontal axis is the ratio of the set soft deadlines to the minimum congestion free trip times.}
  \label{fig1} 
\end{figure*}

\begin{figure}[!htb]
\centering
\includegraphics[trim=0 0 0 0, clip,width=3.5in]{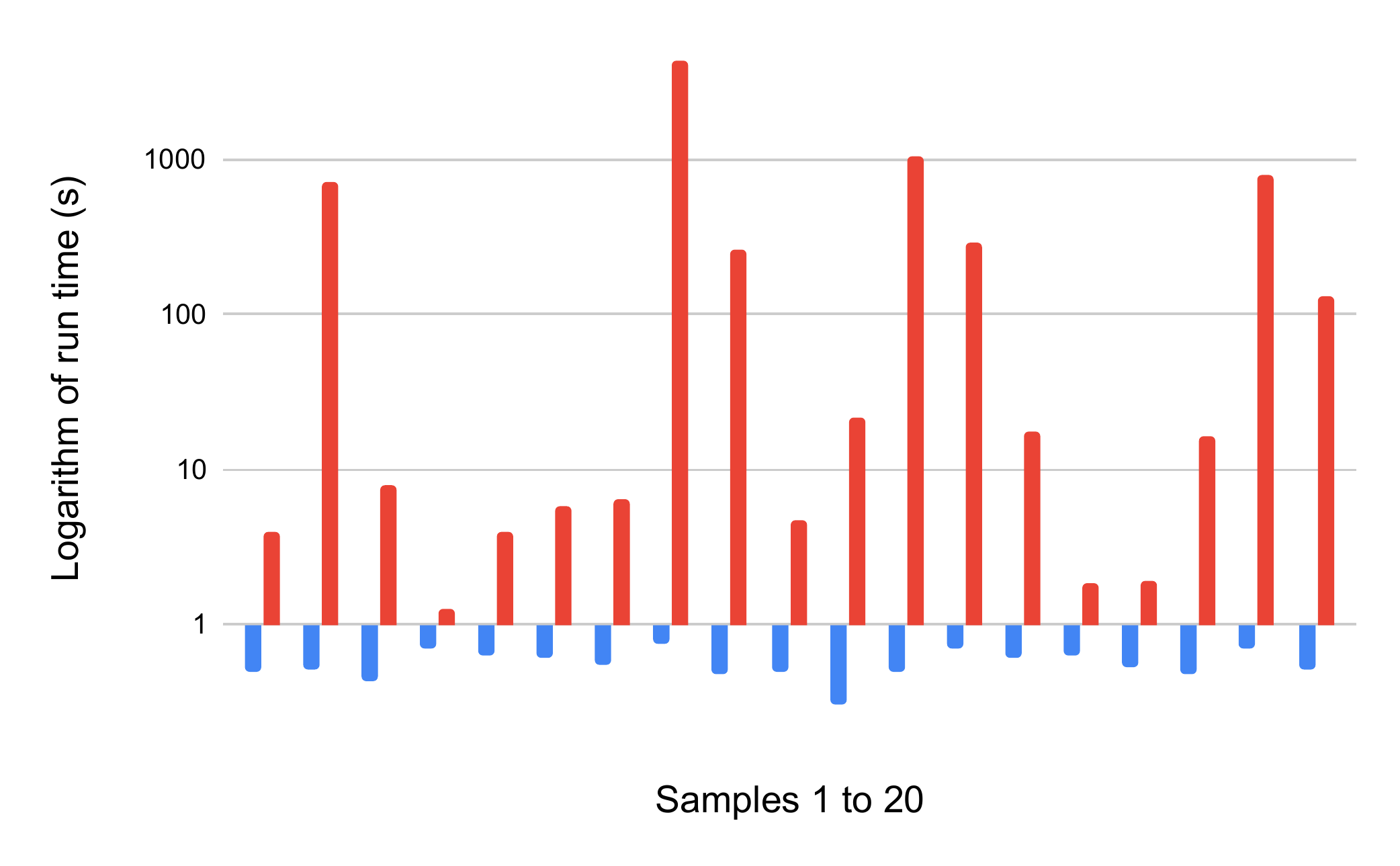}
%pdfcrop --margins '-60 -5 -10 -25' FigureCellularPDF.pdf FigureCellularCropPDF.pdf
% where an .eps filename suffix will be assumed under latex, 
% and a .pdf suffix will be assumed for pdflatex; or what has been declared
% via \DeclareGraphicsExtensions.
\caption{This figure shows the run time results of our heuristic algorithm compared to the MIP solver in logarithmic scale for 20 random instances. Each value represents the average of worst run time among various levels of tightness of soft deadlines. These deadline values are similar to those in Fig. \ref{1a}. 
Each of these instances utilize 25 vehicles. The red bars which all happen to have a value greater than 1 are the run time from MIP solver and the blue bars under the grid line for 1 are the run time from our heuristic algorithm. As can be seen, in most cases, our algorithm implemented in Python is between 1 to 3 orders of magnitude faster.}
\label{fig:HeuToMIP}
\end{figure}

\begin{figure}[!htb]
\centering
\includegraphics[trim=0 0 0 0, clip,width=3.5in]{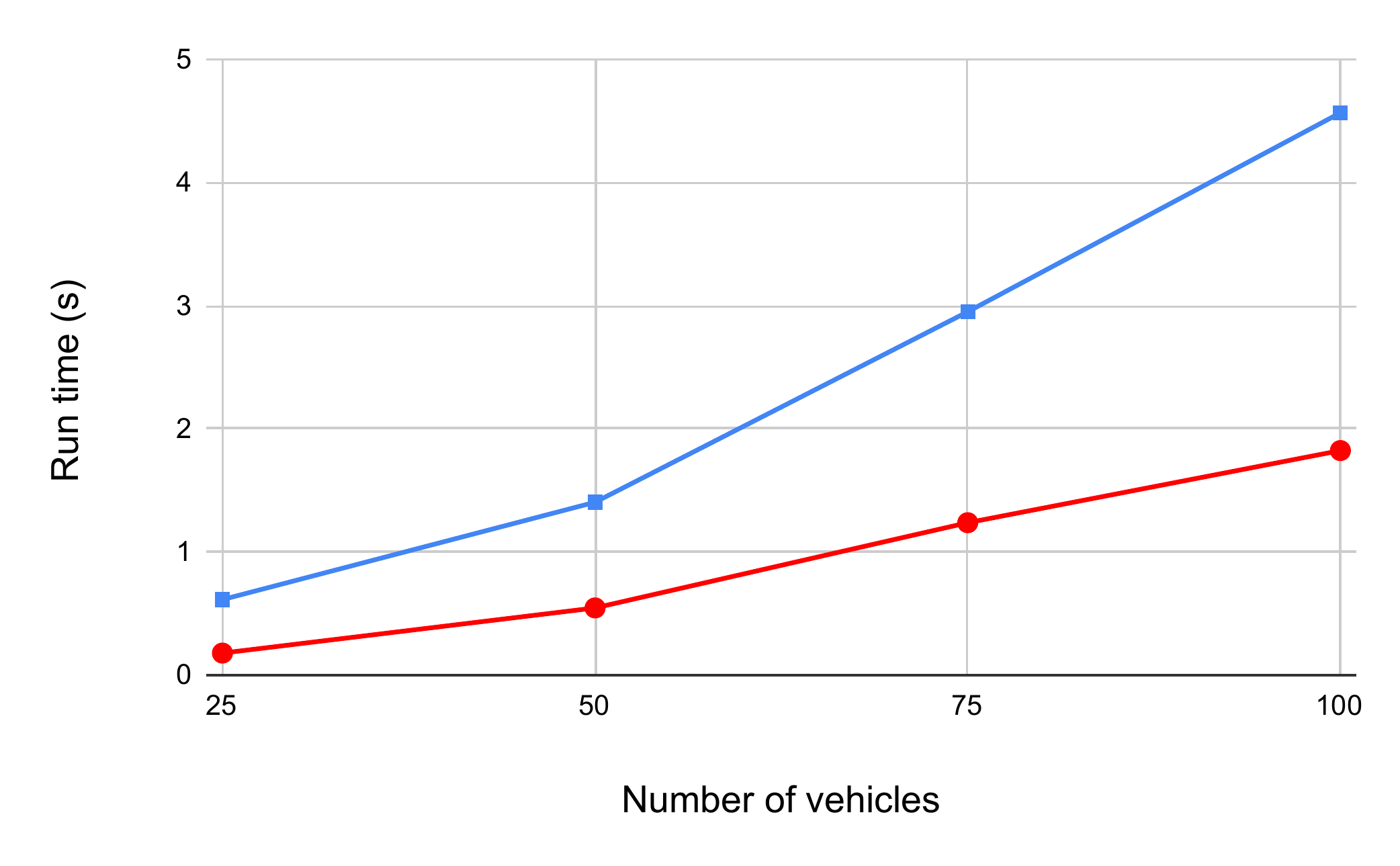}
%pdfcrop --margins '-60 -5 -10 -25' FigureCellularPDF.pdf FigureCellularCropPDF.pdf
% where an .eps filename suffix will be assumed under latex, 
% and a .pdf suffix will be assumed for pdflatex; or what has been declared
% via \DeclareGraphicsExtensions.
\caption{This figure shows the average taken over 20 samples of the worst run time among varying levels of soft deadlines. The value of soft deadlines in our test are the same as those appearing in Fig. \ref{fig1}. The chart shows how the run time is affected by increasing the number of vehicles from 25 to 100. The square markers represent results from our heuristic algorithm while the circle markers represent the baseline results.}
\label{fig:chartRunTimes}
\end{figure}

%\subsection{Implementation}
%We use a Microsoft Surface Pro 5 with Intel Core i5-7300U CPU at 2.60GHz (4 Cores) and 8GBs of RAM. It runs an Ubuntu 18.04.1 LTS Linux operating system. We use Gurobi Version 8.1.1's Python 3 binding for constructing the model and optimization for both MIP and LP instances. The shortest path algorithm is by Networkx version 2.3 Graph library for Python. In our comparison, the exact same instance of the problem is fed to both algorithms.

\section{Discussion and future work}
In this work, we assumed the route for each vehicle is fixed. There are scenarios where this might make sense for example when some UAV companies are allocated various paths in the airspace by the government and all the scheduling must be performed over these preallocated paths. However, in absence of these kind of limitations, an interesting problem to consider for future research is the joint optimization of routing and scheduling.

Another simplified aspect of the problem is the assumption that each link has unlimited capacity for holding vehicles. This is not entirely without merit as the in-flow and out-flow are bounded by various constraints. However, it might be plausible to set a direct capacity limit on a link, especially when the minimum permitted velocity on a link is $0$. 

With minor modifications, it is possible to expand the DEADLINE \& PROXIMITY algorithm and the baseline PROXIMITY to the dynamic VSP to make it more applicable to the real word. It is interesting to see how these two algorithms will compare in that case.

In our numerical result section, we demonstrated for some special cases the result from our heuristic algorithm. It will be interesting to explore more cases, such as when the trip request times are arbitrary and the minimum speed for vehicles can be more than 0.

A limitation of our work is that in our heuristics algorithm DEADLINE \& PROXIMITY, the access rule is based on first the proximity of the vehicles to the intersections and only then the deadlines are used as a tie breaking mechanism. In our setup, the time stamps from the created schedules were all a multiple of an integer (in our case, 5), allowing the tie breaking rule a chance to have an influence. In practice, the tie breaking will likely never be used as the odds of having two vehicles at the exact same distance (up to our measurement digital precision) is close to 0. In other words, in presence of noise, etc., our heuristic algorithm degrades to the baseline algorithm. A simple fix is to use a window of a certain size inside which the deadline is the deciding rule. However, to determine the optimal window size will be the subject of a future work.

Another question with regard to both the baseline and the heuristic algorithms is how irregular sizes for the links will affect the quality of the created schedules. For example, a vehicle might be an intersection away from an intersection of interest, and only a very short distance away. But it will get a lower priority compared to a further away vehicle which is already on a connected link to the intersection. It is an interesting line of research to pursue expanding the pool of eligible vehicles in deciding the right of way to include vehicles such as the one in the example above.

Finally, there is an approximation algorithm that is used for Job Shop Scheduling known as Shifting Bottleneck as first appeared in the seminal work \cite{Ada88}. The goal was to minimize the makespan (completion time of the last job to finish). The algorithm works by sequencing each machine as a one machine optimization problem which can be solved efficiently. At any point, a list of sequenced machines and a list of unsequenced machines exists. Based on some criteria about which machine is the biggest next bottlenecks, the unsequenced machines are ranked and the machine with the highest rank is chosen to be sequenced next. Based on the results, the already sequenced machines are resequenced one by one till no further improvements can be found in their schedule. Again, based on the outcome of this step, and according to the bottleneck criteria the next machine is chosen and the process is repeated till all the machines are sequenced. In our context, it is an interesting venue for research to see if any algorithm with a similar idea of detecting the bottleneck vehicles or the bottleneck nodes or even a set of bottleneck neighboring nodes (similar to a zone) can be used to give better schedules.

\section{Conclusion}
In this work we introduced a new scheduling problem called Vehicle Scheduling Problem. Given a path between a pair of source and destination for each vehicle over a graph, the goal is to minimize some objective function such as the number of tardy vehicles (i.e. missed deadlines) subject to various constraints as follows. This includes, maintaining a safety time gap at conflicting nodes and meeting hard deadlines after trips are requested by vehicles. Furthermore, each vehicle is required to maintain its speed in an allowable range over any link. We established the NP-hardness of VSP for all commonly used objective functions in the context of JSP. Then, we formulate this problem in terms of a Mixed Integer Linear Programming where the chosen objective function is the number of tardy vehicles. For the case of simultaneous trip requests, we then devised a heuristic algorithm based on giving priority to vehicles closer to an intersection and with less slack time left. We also devised a baseline algorithm that mimics to some extent the real world traffic, i.e. vehicles closer to an intersection will get there first. We then performed numerical experiments on random instances of the problem over a grid like graph to compare the obtained objective value from the exact solution to the MIP formulation as well as the baseline algorithm and our algorithm.

% if have a single appendix:
%\appendix[Proof of the Zonklar Equations]
% or
%\appendix  % for no appendix heading
% do not use \section anymore after \appendix, only \section*
% is possibly needed

% use appendices with more than one appendix
% then use \section to start each appendix
% you must declare a \section before using any
% \subsection or using \label (\appendices by itself
% starts a section numbered zero.)
%

% \appendices
% \section{Proof of the First Zonklar Equation}
% Appendix one text goes here.

% you can choose not to have a title for an appendix
% if you want by leaving the argument blank
% \section{}
% Appendix two text goes here.

% use section* for acknowledgment
\section*{Acknowledgment}
We would like to thank Stephen Smith for fruitful discussions on the subject.

% Can use something like this to put references on a page
% by themselves when using endfloat and the captionsoff option.
\ifCLASSOPTIONcaptionsoff
  \newpage
\fi

% trigger a \newpage just before the given reference
% number - used to balance the columns on the last page
% adjust value as needed - may need to be readjusted if
% the document is modified later
%\IEEEtriggeratref{8}
% The "triggered" command can be changed if desired:
%\IEEEtriggercmd{\enlargethispage{-5in}}

% references section

% can use a bibliography generated by BibTeX as a .bbl file
% BibTeX documentation can be easily obtained at:
% http://mirror.ctan.org/biblio/bibtex/contrib/doc/
% The IEEEtran BibTeX style support page is at:
% http://www.michaelshell.org/tex/ieeetran/bibtex/
\bibliographystyle{IEEEtran}
% argument is your BibTeX string definitions and bibliography database(s)
\bibliography{IEEEabrv,refs}

% Generated by IEEEtran.bst, version: 1.14 (2015/08/26)
\begin{thebibliography}{10}
\providecommand{\url}[1]{#1}
\csname url@samestyle\endcsname
\providecommand{\newblock}{\relax}
\providecommand{\bibinfo}[2]{#2}
\providecommand{\BIBentrySTDinterwordspacing}{\spaceskip=0pt\relax}
\providecommand{\BIBentryALTinterwordstretchfactor}{4}
\providecommand{\BIBentryALTinterwordspacing}{\spaceskip=\fontdimen2\font plus
\BIBentryALTinterwordstretchfactor\fontdimen3\font minus
  \fontdimen4\font\relax}
\providecommand{\BIBforeignlanguage}[2]{{%
\expandafter\ifx\csname l@#1\endcsname\relax
\typeout{** WARNING: IEEEtran.bst: No hyphenation pattern has been}%
\typeout{** loaded for the language `#1'. Using the pattern for}%
\typeout{** the default language instead.}%
\else
\language=\csname l@#1\endcsname
\fi
#2}}
\providecommand{\BIBdecl}{\relax}
\BIBdecl

\bibitem{Gha16}
M.~Gharibi, R.~Boutaba, and S.~L. Waslander, ``Internet of drones,'' \emph{IEEE
  Access}, vol.~4, pp. 1148--1162, 2016.

\bibitem{Pin16}
M.~L. Pinedo, \emph{Scheduling: Theory, Algorithms, and Systems}.\hskip 1em
  plus 0.5em minus 0.4em\relax Springer, 2016.

\bibitem{Knu00}
S.~Knust, ``Shop-scheduling problems with transportation,'' 2000.

\bibitem{Sch98}
J.~M. Schutten, ``Practical job shop scheduling,'' \emph{Annals of Operations
  Research}, vol.~83, pp. 161--178, 1998.

\bibitem{Hur05}
J.~Hurink and S.~Knust, ``Tabu search algorithms for job-shop problems with a
  single transport robot,'' \emph{European journal of operational research},
  vol. 162, no.~1, pp. 99--111, 2005.

\bibitem{Pet07}
L.~L. Peterson and B.~S. Davie, \emph{Computer networks: a systems
  approach}.\hskip 1em plus 0.5em minus 0.4em\relax Elsevier, 2007.

\bibitem{Yao19}
J.~Yao and N.~Ansari, ``Qos-aware rechargeable uav trajectory optimization for
  sensing service,'' in \emph{ICC 2019-2019 IEEE International Conference on
  Communications (ICC)}.\hskip 1em plus 0.5em minus 0.4em\relax IEEE, 2019, pp.
  1--6.

\bibitem{All12}
C.~Allignol, N.~Barnier, P.~Flener, and J.~Pearson, ``Constraint programming
  for air traffic management: a survey 1: In memory of pascal brisset,''
  \emph{The Knowledge Engineering Review}, vol.~27, no.~3, pp. 361--392, 2012.

\bibitem{Nol10}
M.~Nolan, \emph{Fundamentals of air traffic control}.\hskip 1em plus 0.5em
  minus 0.4em\relax Cengage learning, 2010.

\bibitem{Fan15}
W.~Fang, S.~Yang, and X.~Yao, ``A survey on problem models and solution
  approaches to rescheduling in railway networks,'' \emph{IEEE Transactions on
  Intelligent Transportation Systems}, vol.~16, no.~6, pp. 2997--3016, 2015.

\bibitem{Cor98}
J.-F. Cordeau, P.~Toth, and D.~Vigo, ``A survey of optimization models for
  train routing and scheduling,'' \emph{Transportation science}, vol.~32,
  no.~4, pp. 380--404, 1998.

\bibitem{Len79}
J.~K. Lenstra and A.~R. Kan, ``Computational complexity of discrete
  optimization problems,'' in \emph{Annals of Discrete Mathematics}.\hskip 1em
  plus 0.5em minus 0.4em\relax Elsevier, 1979, vol.~4, pp. 121--140.

\bibitem{Kra00}
S.~A. Kravchenko, ``Minimizing the number of late jobs for the two-machine
  unit-time job-shop scheduling problem,'' \emph{Discrete Applied Mathematics},
  vol.~98, no.~3, pp. 209--217, 2000.

\bibitem{Tim98}
V.~G. Timkovsky, ``Is a unit-time job shop not easier than identical parallel
  machines?'' \emph{Discrete applied mathematics}, vol.~85, no.~2, pp.
  149--162, 1998.

\bibitem{Kub96}
W.~Kubiak and V.~Timkovsky, ``Total completion time minimization in two-machine
  job shops with unit-time operations,'' \emph{European journal of operational
  research}, vol.~94, no.~2, pp. 310--320, 1996.

\bibitem{Ada88}
J.~Adams, E.~Balas, and D.~Zawack, ``The shifting bottleneck procedure for job
  shop scheduling,'' \emph{Management science}, vol.~34, no.~3, pp. 391--401,
  1988.

\end{thebibliography}
%
% <OR> manually copy in the resultant .bbl file
% set second argument of \begin to the number of references
% (used to reserve space for the reference number labels box)

%\begin{thebibliography}{1}

%\bibitem{Rap13}
%A.~Raptopoulos, \emph{No roads? There is a drone for that}, 2013. [Online]. \hskip 1em plus
%  0.5em minus 0.4em\relax Available: \url{https://www.ted.com/talks/andreas_raptopoulos_no_roads_there_s_a_drone_for_that}. [Accessed: 5- May- %2015]. 

%\bibitem{CNN13}
%CNN’s website, retrieved on July 29, 2015 from http://www.cnn.com/2013/12/02/tech/innovation/amazon-drones-questions/
%[Nol10] Nolan, Michael. Fundamentals of air traffic control. Cengage Learning, 2010.

%\bibitem{IEEEhowto:kopka}
%H.~Kopka and P.~W. Daly, \emph{A Guide to \LaTeX}, 3rd~ed.\hskip 1em plus
%  0.5em minus 0.4em\relax Harlow, England: Addison-Wesley, 1999.
%\end{thebibliography}

% biography section
% 
% If you have an EPS/PDF photo (graphicx package needed) extra braces are
% needed around the contents of the optional argument to biography to prevent
% the LaTeX parser from getting confused when it sees the complicated
% \includegraphics command within an optional argument. (You could create
% your own custom macro containing the \includegraphics command to make things
% simpler here.)
%\begin{IEEEbiography}[{\includegraphics[width=1in,height=1.25in,clip,keepaspectratio]{mshell}}]{Michael Shell}
% or if you just want to reserve a space for a photo:
\newpage

\begin{IEEEbiography}[{\includegraphics[width=1in,height=1.25in,clip,keepaspectratio]{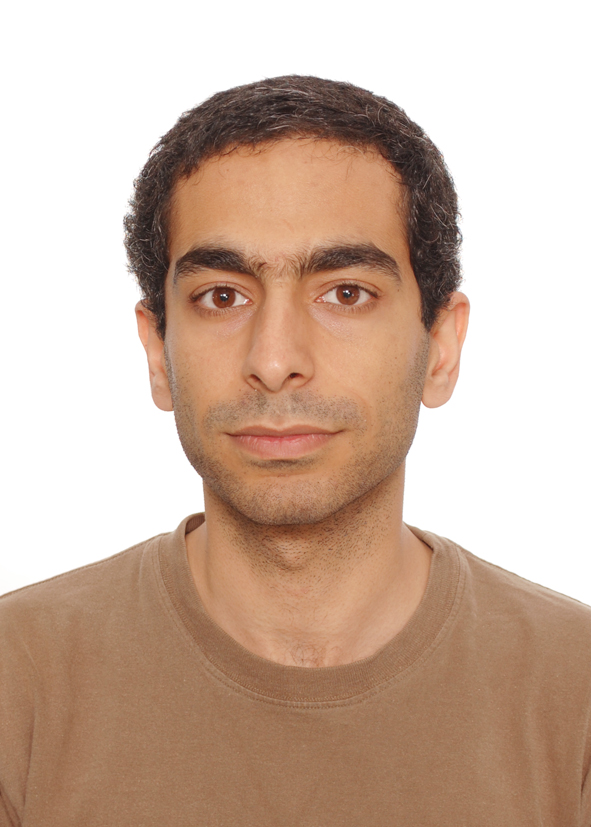}}]{Mirmojtaba Gharibi}
%\begin{IEEEbiography}{Mirmojtaba Gharibi}
received the B.ASc. in electrical engineering from Sharif University of Technology in 2009. He completed his M.Math, and currently is pursuing a PhD degree both in Computer Science at University of Waterloo, Waterloo, Canada.
\end{IEEEbiography}

\begin{IEEEbiography}[{\includegraphics[width=1in,height=1.25in,clip,keepaspectratio]{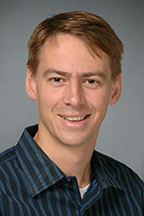}}]{Steven L. Waslander} is an Associate Professor with the University of
Toronto Institute for Aerospace Studies.  He received the B.Sc.E. degree
from Queen's University in 1998, and the M.S. and Ph.D. degrees from
Stanford University in Aeronautics and Astronautics, in 2002 and 2007,
respectively.  He joined the University of Waterloo as faculty in 2008
and moved to the University of Toronto, in 2018, where he founded and
directs the Toronto Robotics and Artificial Intelligence Laboratory. 
His current research interests include quadrotors, autonomous driving,
localization and mapping, object detection and tracking, and multirobot
coordination.
\end{IEEEbiography}

\begin{IEEEbiography}[{\includegraphics[width=1in,height=1.25in,clip,keepaspectratio]{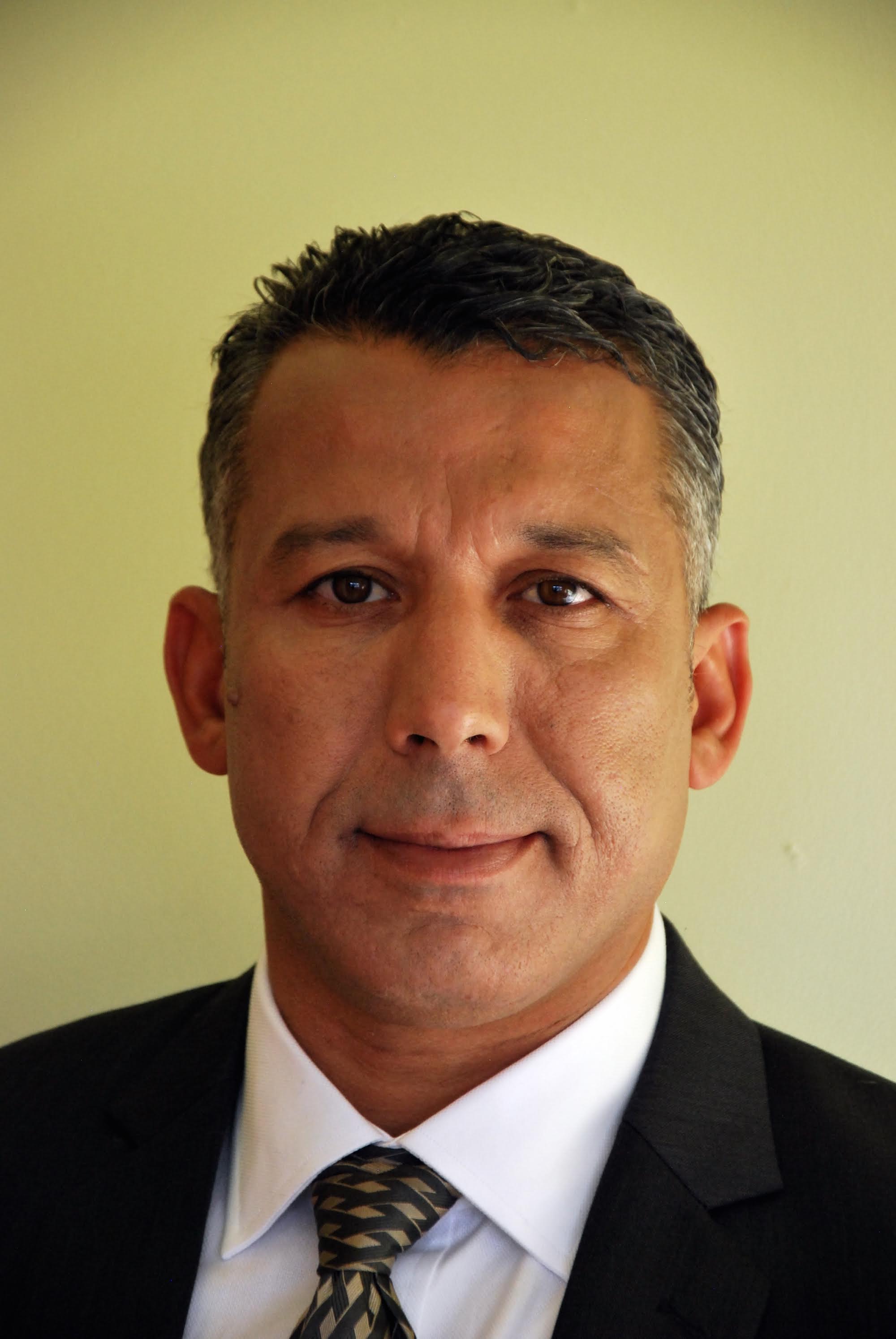}}]{Raouf Boutaba} received the M.Sc. (1990) and Ph.D (1994).
degrees in computer science from the University Pierre \& Marie Curie, Paris.
He is currently a university chair professor of computer science
at the University of Waterloo (Canada). His research interests include
resource management in wired and wireless networks. He was the
founding editor in chief of the IEEE transactions on Network and
Service Management (2007-2010) and the current editor in chief of the
IEEE journal on selected areas in communications. He is a fellow of the IEEE, the Engineering
Institute of Canada, and the Canadian Academy of Engineering.
\end{IEEEbiography}

% You can push biographies down or up by placing
% a \vfill before or after them. The appropriate
% use of \vfill depends on what kind of text is
% on the last page and whether or not the columns
% are being equalized.

%\vfill

% Can be used to pull up biographies so that the bottom of the last one
% is flush with the other column.
%\enlargethispage{-5in}

% that's all folks
\end{document}